\documentclass[runningheads]{llncs}

\usepackage[T1]{fontenc}
\usepackage{graphicx}
\usepackage{amsmath,amssymb}
\usepackage{algorithm}
\usepackage{algorithmic}
\usepackage{booktabs}
\usepackage{multirow}
\usepackage{xcolor}
\usepackage{hyperref}
\usepackage{cleveref}

\newcommand{\R}{\mathbb{R}}
\newcommand{\N}{\mathcal{N}}
\newcommand{\E}{\mathbb{E}}
\newcommand{\G}{\mathcal{G}}
\newcommand{\bW}{\mathbf{W}}
\newcommand{\bx}{\mathbf{x}}
\newcommand{\bh}{\mathbf{h}}

\begin{document}

\title{Cost-Sensitive Neighborhood Aggregation\\for Heterophilous Graphs:\\When Does Per-Edge Routing Help?}

\titlerunning{CSNA: When Does Per-Edge Routing Help?}

\author{Eyal Weiss\orcidID{0000-0003-0508-6213}}

\authorrunning{E. Weiss}

\institute{Computer Science Faculty, Technion --- Israel Institute of Technology\\
\email{eweiss@campus.technion.ac.il}}

\maketitle

\begin{abstract}
Recent work distinguishes two heterophily regimes: \emph{adversarial}, where cross-class edges dilute class signal and harm classification, and \emph{informative}, where the heterophilous structure itself carries useful signal.
We ask: \emph{when does per-edge message routing help, and when is a uniform spectral channel sufficient?}
To operationalize this question we introduce \emph{Cost-Sensitive Neighborhood Aggregation} (CSNA), a GNN layer that computes pairwise distance in a learned projection and uses it to soft-route each message through concordant and discordant channels with independent transformations.
Under a contextual stochastic block model we show that mean aggregation can reverse the label-aligned signal direction under heterophily, and that cost-sensitive weighting with $w_+/w_- > q/p$ preserves the correct sign.
On six benchmarks with uniform tuning, CSNA is competitive with state-of-the-art methods on adversarial-heterophily datasets (Texas, Wisconsin, Cornell, Actor) but underperforms on informative-heterophily datasets (Chameleon, Squirrel)---precisely the regime where per-edge routing has no useful decomposition to exploit.
The pattern is itself the finding: the cost function's ability to separate edge types serves as a diagnostic for the heterophily regime, revealing \emph{when} fine-grained routing adds value over uniform channels and when it does not.
Code is available at \url{https://github.com/eyal-weiss/CSNA-public}.

\keywords{Graph Neural Networks \and Heterophily \and Message Passing \and Cost-Sensitive Aggregation}
\end{abstract}

\section{Introduction}\label{sec:intro}

Message-passing Graph Neural Networks (GNNs)~\cite{kipf2017semi,velickovic2018graph,hamilton2017inductive} aggregate features from a node's neighborhood under an implicit \emph{homophily} assumption: connected nodes share labels or properties.
When this assumption holds, aggregation smooths representations within a class and yields strong node classification~\cite{wu2020comprehensive}.
However, many real-world graphs exhibit \emph{heterophily}---connected nodes frequently differ in label~\cite{pei2020geom,zhu2020beyond}---and on such graphs standard GNNs can perform worse than a simple Multi-Layer Perceptron (MLP) that ignores graph structure entirely, because aggregation \emph{dilutes} rather than \emph{reinforces} class signal~\cite{zhu2020beyond}.

Several architectures address heterophily: H2GCN~\cite{zhu2020beyond} separates ego from multi-hop neighbor representations; GPRGNN~\cite{chien2021adaptive} learns polynomial graph filter coefficients; ACM-GNN~\cite{luan2022revisiting} mixes low-pass, high-pass, and identity channels with per-node gating.
These methods share a common intuition---that different edges carry different quality of information for classification---but differ in \emph{how} they distinguish edges.
ACM-GNN, the most closely related prior work, uses three fixed spectral channels (aggregation, diversification, identity) that are uniform within each channel: every edge receives the same treatment within a given channel.

We propose \emph{Cost-Sensitive Neighborhood Aggregation} (CSNA), which takes a different approach: compute pairwise distance in a learned projection space, and use this distance to soft-route each message through two channels---concordant (low cost, likely same-class) and discordant (high cost, likely different-class)---each with its own learned transformation.
A per-node gating mechanism then combines the channels with an ego (the node's own) representation.
The key difference from ACM-GNN is that CSNA's routing is \emph{per-edge}: each edge receives an individualized routing weight based on the learned distance between its endpoints, rather than a uniform spectral filter applied identically to all edges.
This finer granularity comes at the cost of additional per-edge computation (3--10$\times$ overhead vs.\ GCN; see \Cref{app:runtime}).

\paragraph{Contributions.}
(1)~We characterize when per-edge routing helps: CSNA's learned cost function achieves strong edge-type separation on adversarial-heterophily datasets but not on informative-heterophily datasets, operationalizing the regime distinction through a concrete diagnostic (\Cref{sec:experiments}).
(2)~We introduce CSNA, a dual-channel message-passing layer with per-edge cost-based routing (\Cref{sec:method}); the default (``lite'') version uses only observed divergence $g_{ij}$ in a learned projection; an extended version adds a learned component $h_{ij}$.
(3)~Under a contextual stochastic block model (CSBM), we prove that mean aggregation can reverse the label-aligned signal direction under heterophily, and that cost-sensitive weighting with $w_+/w_- > q/p$ preserves the correct sign (\Cref{sec:theory}).

\section{Related Work}\label{sec:related}

\paragraph{Heterophily-aware GNNs.}
Standard GNNs (GCN~\cite{kipf2017semi}, GAT~\cite{velickovic2018graph}, GraphSAGE~\cite{hamilton2017inductive}) degrade as homophily decreases~\cite{zhu2020beyond}.
H2GCN~\cite{zhu2020beyond} separates ego from higher-order neighbor embeddings.
GPRGNN~\cite{chien2021adaptive} learns polynomial graph filter coefficients that can model both low-pass and high-pass responses.
FAGCN~\cite{bo2021beyond} assigns positive or negative attention weights.
ACM-GNN~\cite{luan2022revisiting} is the most closely related method: it decomposes aggregation into three spectral channels (low-pass, high-pass, identity) and combines them with per-node gating.
CSNA shares the dual-channel-plus-gating architecture but replaces uniform spectral filters with per-edge learned routing.

\paragraph{Distinction from ACM-GNN.}
Both ACM-GNN and CSNA route messages through multiple channels and gate the output per-node.
The difference is in channel construction: ACM-GNN's channels are defined by fixed spectral operations (mean aggregation for low-pass, signed aggregation for high-pass), so every edge within a channel is treated identically.
CSNA computes a per-edge concordance score from learned pairwise distance and uses it to route each edge independently.
This is finer-grained but more expensive: ACM-GNN has the same asymptotic cost as GCN ($O(|E| \cdot d)$ per channel, but with fixed structure), while CSNA requires an additional $O(|E| \cdot d)$ distance computation.

\paragraph{When per-edge routing helps: a toy example.}
\Cref{fig:toy} illustrates the key scenario where per-edge routing outperforms uniform spectral channels.
Consider a heterophilous graph in which most edges are cross-class, but the cross-class edges have \emph{mixed utility}: some neighbors' features, when transformed, carry complementary information that aids classification, while others are misleading.
A uniform high-pass channel (as in ACM-GNN) treats all cross-class edges identically---it cannot distinguish helpful from harmful heterophilous neighbors.
CSNA's per-edge cost function assigns an individual routing weight to each edge, so the model can upweight informative cross-class edges and downweight misleading ones.
This advantage materializes when node features within a class are not homogeneous---some cross-class neighbors provide complementary signal while others contribute noise---precisely the adversarial-heterophily regime.
On informative-heterophily datasets, where nearly all heterophilous edges carry useful structural signal, the per-edge decomposition has no useful separation to exploit and the extra routing overhead buys nothing.

\begin{figure}[t]
\centering
\includegraphics[width=0.95\textwidth]{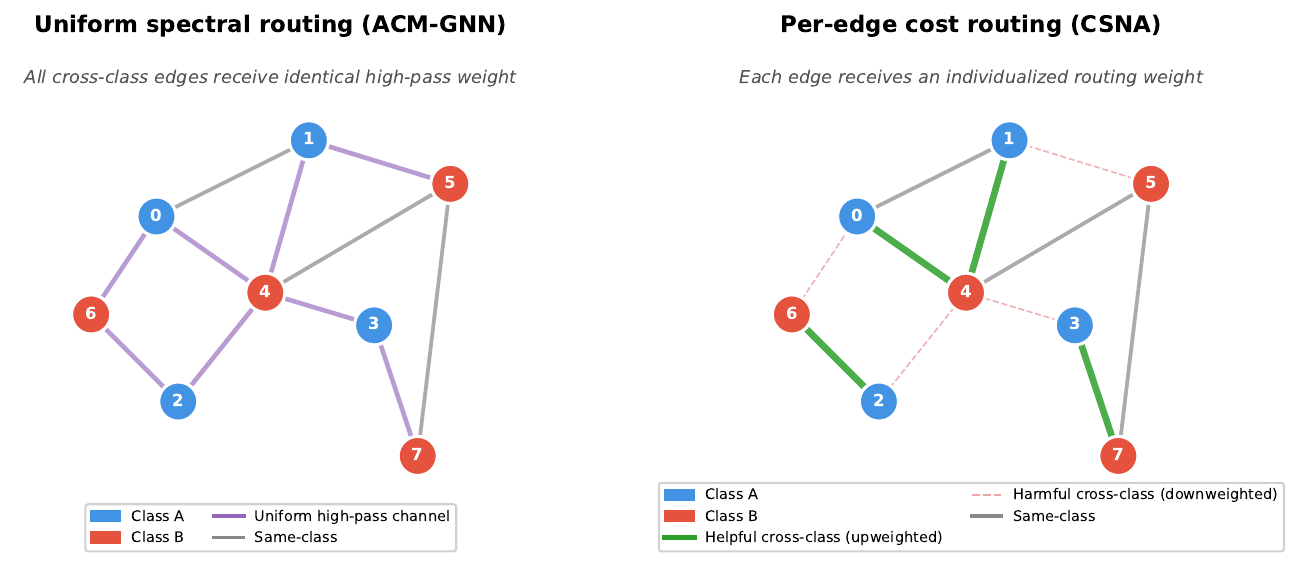}
\vspace{-2mm}
\caption{Toy example: a heterophilous graph where cross-class edges have mixed utility. \textbf{Left:} ACM-GNN's uniform high-pass channel assigns identical weight to all cross-class edges. \textbf{Right:} CSNA's per-edge cost routing upweights helpful cross-class edges (thick green) and downweights harmful ones (thin dashed red). This distinction is possible only when the cost function can separate edge types---the adversarial-heterophily regime.}\label{fig:toy}
\end{figure}

\paragraph{Over-smoothing and over-squashing.}
These pathologies~\cite{li2018deeper,oono2020graph,alon2021on,topping2022understanding} are compounded under heterophily.
Cost-based routing reduces cross-class smoothing but does not address over-squashing, which is a topological bottleneck issue.

\section{Method}\label{sec:method}

\subsection{Preliminaries}

Let $\G = (V, E)$ be an undirected graph with $n = |V|$ nodes and feature matrix $\mathbf{X} \in \R^{n \times d}$.
Each node $i$ has a label $y_i \in \{1, \ldots, C\}$.
The \emph{edge homophily ratio} is $\mathcal{H} = |\{(i,j) \in E : y_i = y_j\}|/|E|$.

In standard message-passing~\cite{gilmer2017neural}, node $i$'s representation at layer $\ell+1$ is:
\begin{equation}\label{eq:mp}
    \bh_i^{(\ell+1)} = \sigma\!\Bigl(\sum_{j \in \N(i)} \alpha_{ij}\, \bW^{(\ell)}\, \bh_j^{(\ell)}\Bigr),
\end{equation}
where $\N(i)$ includes $i$ itself, $\alpha_{ij}$ are aggregation weights, and $\sigma$ is a nonlinearity.

\subsection{Cost-Sensitive Neighborhood Aggregation}\label{sec:csna}

The core idea is simple: compute pairwise distance in a learned projection, use it to soft-route messages through two channels.

\paragraph{Step 1: Edge cost (observed divergence).}
For each edge $(i,j)$ at layer $\ell$, we compute a distance in a learned projection:
\begin{equation}\label{eq:g}
    g_{ij}^{(\ell)} = \|\bW_g\, \bh_i^{(\ell)} - \bW_g\, \bh_j^{(\ell)}\|_2,
\end{equation}
where $\bW_g \in \R^{d' \times d}$ is a learned projection matrix.
Low $g_{ij}$ indicates the endpoints are similar in the projected space (likely concordant); high $g_{ij}$ indicates divergence (likely discordant).
This is the only cost component in the default (lite) version of CSNA.

\paragraph{Step 2: Concordance routing.}
The cost is converted to a soft concordance score:
\begin{equation}\label{eq:concordance}
    s_{ij}^{(\ell)} = \sigma\!\left(\frac{-g_{ij}^{(\ell)}}{\tau}\right) \in (0, 1),
\end{equation}
where $\tau > 0$ is a temperature parameter and $\sigma$ is the sigmoid function.
High concordance (low cost) routes the message toward the concordant channel; low concordance (high cost) routes it toward the discordant channel.

\paragraph{Step 3: Dual-channel aggregation.}
Messages are routed through two channels with independent transformations:
\begin{align}
    \bh_i^{\text{con}} &= \sum_{j \in \N(i)} \tilde{s}_{ij}\, \bW_{\text{con}}\, \bh_j^{(\ell)}, \label{eq:con}\\
    \bh_i^{\text{dis}} &= \sum_{j \in \N(i)} \tilde{d}_{ij}\, \bW_{\text{dis}}\, \bh_j^{(\ell)}, \label{eq:dis}
\end{align}
where $\tilde{s}_{ij} = \mathrm{softmax}_{j \in \N(i)}(s_{ij})$, $\tilde{d}_{ij} = \mathrm{softmax}_{j \in \N(i)}(1 - s_{ij})$, and $\bW_{\text{con}}, \bW_{\text{dis}} \in \R^{d' \times d}$ are independent weight matrices.
The concordant channel emphasizes likely same-class neighbors; the discordant channel processes likely different-class neighbors through a separate transformation.

\paragraph{Step 4: Gated combination.}
The final output combines both channels with the ego representation via per-node gating:
\begin{equation}\label{eq:gate}
    \bh_i^{(\ell+1)} = \sum_{k \in \{\text{con}, \text{dis}, \text{self}\}} \gamma_k(\bh_i) \cdot \bh_i^{k},
\end{equation}
where $\bh_i^{\text{self}} = \bW_{\text{self}}\, \bh_i^{(\ell)}$ and $\gamma(\bh_i) = \mathrm{softmax}(\bW_\gamma [\bh_i^{\text{con}} \| \bh_i^{\text{dis}} \| \bh_i^{\text{self}}]) \in \R^3$.

\paragraph{Extended variant: $g + h$.}
An optional extension adds a learned cost component:
\begin{equation}\label{eq:f}
    f_{ij}^{(\ell)} = g_{ij}^{(\ell)} + h_{ij}^{(\ell)}, \quad\text{where}\quad h_{ij}^{(\ell)} = \mathrm{softplus}\bigl(\mathbf{a}^\top [\bW_g \bh_i^{(\ell)} \| \bW_g \bh_j^{(\ell)}]\bigr),
\end{equation}
where $[\cdot \| \cdot]$ denotes vector concatenation, $\mathbf{a} \in \R^{2d'}$ is a learnable parameter vector, and $f_{ij}$ replaces $g_{ij}$ in \Cref{eq:concordance}.
In our experiments, the extended variant wins clearly only on Wisconsin (84.1 vs.\ 79.6); on all other datasets the lite version performs comparably or better (\Cref{app:variants}).
We therefore present the lite version ($g_{ij}$ only) as the default.

\paragraph{Calibration regularization.}
When training with labels, we add a regularizer that penalizes cost overestimation on same-class edges:
\begin{equation}\label{eq:cal}
    \mathcal{L}_{\text{cal}} = \frac{1}{|E_{\text{train}}|} \sum_{(i,j) \in E_{\text{train}}} \bigl[\mathrm{ReLU}(g_{ij} - \mathbb{1}[y_i \neq y_j])\bigr]^2,
\end{equation}
where $E_{\text{train}}$ denotes edges between labeled nodes.
The full objective is $\mathcal{L} = \mathcal{L}_{\text{CE}} + \lambda_{\text{cal}}\, \mathcal{L}_{\text{cal}}$, with $\lambda_{\text{cal}} = 0.1$ fixed across all experiments.
Note that $g_{ij}$ and the binary indicator $\mathbb{1}[y_i \neq y_j]$ are on different scales; the regularizer acts as a soft penalty that directly shapes the routing signal, encouraging lower costs (and thus higher concordance) for same-class edges.

\paragraph{Architecture details.}
We apply an input MLP before the first CSNA layer, add residual connections, and initialize the gate bias to $[0, 0, 1]$ (favoring the ego channel so the model starts near MLP-like behavior).
Self-loops are added before cost computation; they receive $g_{ii}=0$, so ego information flows primarily through the concordant and self channels.
The softmax normalization of concordance weights is performed per source node; we also tested per-destination normalization (more common in message-passing GNNs) and found no consistent difference across datasets.

The CSNA layer is summarized in Algorithm~\ref{alg:csna}.

\begin{algorithm}[t]
\caption{CSNA Layer (Lite Version)}\label{alg:csna}
\begin{algorithmic}[1]
\REQUIRE Node features $\mathbf{H}^{(\ell)} \in \R^{n \times d}$, edge index $E$, temperature $\tau$
\ENSURE Updated features $\mathbf{H}^{(\ell+1)} \in \R^{n \times d'}$
\FOR{each edge $(i,j) \in E$}
    \STATE $g_{ij} \leftarrow \|\bW_g \bh_i - \bW_g \bh_j\|_2$ \COMMENT{Pairwise distance in learned projection}
    \STATE $s_{ij} \leftarrow \sigma(-g_{ij}/\tau)$ \COMMENT{Concordance score}
\ENDFOR
\FOR{each node $i \in V$}
    \STATE $\bh_i^{\text{con}} \leftarrow \sum_{j \in \N(i)} \tilde{s}_{ij}\, \bW_{\text{con}} \bh_j$ \COMMENT{Concordant channel}
    \STATE $\bh_i^{\text{dis}} \leftarrow \sum_{j \in \N(i)} \tilde{d}_{ij}\, \bW_{\text{dis}} \bh_j$ \COMMENT{Discordant channel}
    \STATE $\bh_i^{\text{self}} \leftarrow \bW_{\text{self}} \bh_i$ \COMMENT{Ego transform}
    \STATE $\bh_i^{(\ell+1)} \leftarrow \sum_k \gamma_k(\bh_i) \cdot \bh_i^k$ \COMMENT{Gated combination}
\ENDFOR
\end{algorithmic}
\end{algorithm}

\section{Theoretical Analysis}\label{sec:theory}

We analyze the advantage of cost-sensitive aggregation over mean aggregation in heterophilous settings using the \emph{contextual stochastic block model} (CSBM)~\cite{zhu2020beyond}.
Our analysis uses the binary ($C=2$) case for tractability; our experimental datasets have $C=5$, and we discuss the multi-class extension in \Cref{app:multiclass}.
Importantly, the theorems below apply to \emph{any} weighted aggregation scheme---they are not specific to the $g+h$ decomposition or the lite variant.

\begin{definition}[Contextual Stochastic Block Model]\label{def:csbm}
A graph $\G$ is drawn from $\mathrm{CSBM}(n, 2, p, q, \mu)$ with $n$ nodes in two equal-sized classes $V_+, V_-$.
Edges are drawn independently: probability $p$ within classes, probability $q$ between classes.
Node features: $\bx_i \sim \mathcal{N}(\boldsymbol{\mu}_{y_i}, \mathbf{I}_d)$, with $\boldsymbol{\mu}_+ = +\frac{\mu}{2}\mathbf{e}_1$ and $\boldsymbol{\mu}_- = -\frac{\mu}{2}\mathbf{e}_1$.
\end{definition}

The homophily ratio is $\mathcal{H} = p/(p + q)$.
The heterophilous regime is $q > p$, i.e., $\mathcal{H} < 1/2$.

\begin{theorem}[Signal distortion under mean aggregation]\label{thm:degradation}

\noindent Let $\G \sim \mathrm{CSBM}(n, 2, p, q, \mu)$ with constant $p, q \in (0,1)$ and equal class sizes.
After one round of symmetrically normalized aggregation with self-loops, $\mathbf{H}^{(1)} = \tilde{\mathbf{A}} \mathbf{X}$, where $\tilde{\mathbf{A}} = \mathbf{D}^{-1/2}(\mathbf{A}+\mathbf{I})\mathbf{D}^{-1/2}$, the expected class-mean representations satisfy:
\begin{equation}
    \E[\bar{\bh}_+^{(1)} - \bar{\bh}_-^{(1)}] = \frac{p-q}{p+q}(\boldsymbol{\mu}_+ - \boldsymbol{\mu}_-) + O(1/n),
\end{equation}
where $\bar{\bh}_c^{(1)}$ is the mean representation of class $c$.
The scaling factor $\lambda = (p-q)/(p+q)$ is signed: when $q > p$, it is negative, reversing the label-aligned signal direction.
The magnitude $|\lambda|$ is attenuated (strictly less than $1$) whenever $q \neq p$, and is minimized near $p = q$.
In the extreme heterophily limit $q \gg p$, $|\lambda|$ approaches $1$---strong heterophily distorts but does not collapse the signal.
\end{theorem}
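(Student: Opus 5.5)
The plan is to reduce everything to a degree-concentration argument, then compute a conditional expectation exactly. Fix a node $i\in V_+$ and write
\[
\bh_i^{(1)}=\sum_{j}\frac{(\mathbf{A}+\mathbf{I})_{ij}}{\sqrt{d_i d_j}}\,\bx_j ,\qquad d_i = 1+\deg(i).
\]
Since $p,q$ are constants, each degree satisfies $\deg(i)=\tfrac n2(p+q)+O(\sqrt{n\log n})$ with probability $1-O(n^{-c})$, by a Chernoff bound and a union bound over nodes. On this event we replace $\sqrt{d_i d_j}$ by the common value $\bar d=\tfrac n2(p+q)$ up to a multiplicative factor $1+O(\sqrt{\log n/n})$. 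The features are independent of $\mathbf{A}$, and $\bx_j$ has mean $\boldsymbol{\mu}_{y_j}$. Taking expectation over features first, then over edges, gives
\[
\E[\bh_i^{(1)}]\approx\frac{1}{\bar d}\Bigl(\boldsymbol{\mu}_+ + \tfrac n2 p\,\boldsymbol{\mu}_+ + \tfrac n2 q\,\boldsymbol{\mu}_-\Bigr)
=\frac{p\boldsymbol{\mu}_+ + q\boldsymbol{\mu}_-}{p+q}+O(1/n).
\]
The self-loop contributes only $\boldsymbol{\mu}_+/\bar d=O(1/n)$. Counting $n/2-1$ rather than $n/2$ same-class candidates is also an $O(1/n)$ correction. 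By symmetry, for $i\in V_-$ we get $(q\boldsymbol{\mu}_+ + p\boldsymbol{\mu}_-)/(p+q)$. Averaging over each class and subtracting yields
\[
\E[\bar{\bh}_+^{(1)}-\bar{\bh}_-^{(1)}]=\frac{(p-q)(\boldsymbol{\mu}_+-\boldsymbol{\mu}_-)}{p+q}+O(1/n).
\]

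The main obstacle is making the error term $O(1/n)$ rigorous, rather than merely $O(\sqrt{\log n/n})$. The difficulty is that the normalization $1/\sqrt{d_id_j}$ is correlated with the edge indicators $A_{ij}$, so the expectation does not factor. I would use two devices.

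First, exploit linearity in $\bx$. Because the features are independent of the graph, $\E[\bh_i^{(1)}\mid\mathbf{A}]=\sum_j \tilde A_{ij}\boldsymbol{\mu}_{y_j}$. Since $\boldsymbol{\mu}_\pm=\pm\tfrac\mu2\mathbf{e}_1$, this equals $\tfrac\mu2\mathbf{e}_1(S_i^{\text{same}}-S_i^{\text{cross}})$, where $S_i^{\text{same}}$ and $S_i^{\text{cross}}$ are the normalized weights on same-class and cross-class neighbors. So it suffices to compute $\E[\tilde A_{ij}]$ for a same-class pair and for a cross-class pair.

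Second, condition on $A_{ij}=1$ and write $d_i=2+B_i$ and $d_j=2+B_j$, where $B_i,B_j$ are sums of independent Bernoullis, independent of each other except through the shared edge. A second-order Taylor expansion of $x\mapsto x^{-1/2}$ around the means, together with $\mathrm{Var}(B_i)=O(n)$, gives
\[
\E\bigl[(d_id_j)^{-1/2}\mid A_{ij}=1\bigr]=\frac{1}{\bar d}\bigl(1+O(1/n)\bigr).
\]
The concentration event handles the tails; on its complement the quantity is bounded by $1$, so the tails contribute $O(n^{-c})$. This yields $\E[\tilde A_{ij}]=p/\bar d\,(1+O(1/n))$ for same-class pairs and $q/\bar d\,(1+O(1/n))$ for cross-class pairs. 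Summing over the $\Theta(n)$ terms preserves the $O(1/n)$ relative error.

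The remaining qualitative claims follow from $\lambda=(p-q)/(p+q)$ on $(0,1)^2$:
\begin{itemize}
\item The sign of $\lambda$ is the sign of $p-q$, so $\lambda<0$ when $q>p$, which reverses the signal direction.
\item $|\lambda|=|p-q|/(p+q)<1$ because $p,q>0$.
\item $|\lambda|$ vanishes exactly on $p=q$, which is its minimum.
\item For fixed $p$, $|\lambda|\to 1$ as $q/p\to\infty$.
\end{itemize}
Note that the equality case $|\lambda|=1$ is impossible for $p,q>0$; this is why the statement says "approaches".
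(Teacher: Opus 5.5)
Your proof is correct. Its leading-order skeleton matches the paper's: write $\E[\bh_i^{(1)}]$ for $i\in V_+$ as a mixture of $\boldsymbol{\mu}_+$ and $\boldsymbol{\mu}_-$ with weights $p/(p+q)$ and $q/(p+q)$, absorb the self-loop and the $n/2-1$ versus $n/2$ discrepancy into $O(1/n)$, and finish by symmetry.

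The genuine difference is how you get the $O(1/n)$ error. The paper uses Hoeffding to say $n_s(i)/d_i$ is within $O(1/\sqrt{n})$ of $p/(p+q)$. It then asserts that these per-node errors average to $O(1/n)$ over the $n/2$ nodes of a class. That step does not work. The target is an expectation, and by exchangeability every $i\in V_+$ has the same $\E[\bh_i^{(1)}]$. Averaging over nodes therefore shrinks fluctuations but not bias, and concentration alone only bounds the bias by $O(1/\sqrt{n})$. The paper also reasons with the row-normalized fraction $n_s(i)/d_i$, not the symmetric weights $(d_id_j)^{-1/2}$ of $\tilde{\mathbf{A}}$.

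Your route is what actually justifies the stated rate for the stated operator. You use linearity in $\bx$ to reduce to $\E[\tilde{A}_{ij}]$. You observe that, given $A_{ij}=1$, the residual degrees depend on disjoint edge sets and are therefore independent. You then apply a second-order expansion of $x^{-1/2}$ whose first-order term has zero mean. The paper's version buys brevity and a one-line transfer to \Cref{thm:preservation} by swapping in $w_\pm$. Yours buys an actual proof of the claimed $O(1/n)$ bias.

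One small slip in the qualitative part: since $q<1$, you cannot fix $p$ and send $q/p\to\infty$. For fixed $p$ one has $|\lambda|<(1-p)/(1+p)$. The limit $|\lambda|\to 1$ requires $p/q\to 0$, e.g.\ $p\to 0$ with $q$ fixed.
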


\begin{proof}
Under the dense CSBM, $d_i = \Theta(n)$, so the self-loop in $\mathbf{A}+\mathbf{I}$ contributes $O(1/n)$ per node and does not affect the leading coefficient.
For a node $i \in V_+$, same-class neighbors number $n_s(i) \sim \mathrm{Bin}(n/2-1, p)$ and different-class neighbors $n_d(i) \sim \mathrm{Bin}(n/2, q)$.
At leading order:
\begin{align}
    \E[\bh_i^{(1)} | y_i = +1] &= \frac{p}{p+q}\boldsymbol{\mu}_+ + \frac{q}{p+q}\boldsymbol{\mu}_- + O(1/n) = \frac{p-q}{p+q} \cdot \frac{\mu}{2}\mathbf{e}_1 + O(1/n),
\end{align}
where the $O(1/n)$ term absorbs both the self-loop contribution and the concentration error (by Hoeffding's inequality, $n_s(i)/d_i$ concentrates around $p/(p+q)$ at rate $O(1/\sqrt{n})$, giving $O(1/n)$ after averaging over $n/2$ nodes per class).
By symmetry and averaging: $\E[\bar{\bh}_+^{(1)} - \bar{\bh}_-^{(1)}] = \frac{p-q}{p+q}(\boldsymbol{\mu}_+ - \boldsymbol{\mu}_-) + O(1/n)$. \qed
\end{proof}

\begin{theorem}[Cost-sensitive aggregation preserves signal direction]\label{thm:preservation}
Under the same CSBM, suppose an aggregation scheme weights each edge $(i,j)$ by $w_{ij}$, with $\E[w_{ij} | y_i = y_j] = w_+$ and $\E[w_{ij} | y_i \neq y_j] = w_-$, where $w_+ > w_- \geq 0$.
Then:
\begin{equation}
    \E[\bar{\bh}_+^{(1)} - \bar{\bh}_-^{(1)}] = \frac{p\,w_+ - q\,w_-}{p\,w_+ + q\,w_-}(\boldsymbol{\mu}_+ - \boldsymbol{\mu}_-) + O(1/n).
\end{equation}
The scaling factor is positive---preserving the original class direction---if and only if $w_+/w_- > q/p$.
\end{theorem}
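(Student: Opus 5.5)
The plan is to mirror the proof of \Cref{thm:degradation}, replacing the normalized adjacency by the weighted, row-normalized aggregation $\bh_i^{(1)} = \sum_{j \in \N(i)} w_{ij}\bx_j / \sum_{j\in\N(i)} w_{ij}$. This is the form the scheme must take for a ratio like $(pw_+ - qw_-)/(pw_+ + qw_-)$ to arise; the softmax normalization of \Cref{eq:con} is of this type. I will treat the weights as independent of the neighbor features $\bx_j$ conditional on labels. This is needed so that $\E[w_{ij}\bx_j \mid y_i,y_j] = \E[w_{ij}\mid y_i,y_j]\,\boldsymbol{\mu}_{y_j}$, and I would state it explicitly as a standing assumption. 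Alternatively, I would assume $w_{ij}$ depends on features only through a quantity uncorrelated with the class-mean direction.

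Fix $i\in V_+$. First, I split the numerator and the denominator into the same-class and cross-class neighbor sums. The same-class part of the numerator is $S_+ = \sum_{j\in V_+\cap\N(i)} w_{ij}\bx_j$, and the cross-class part is $S_- = \sum_{j\in V_-\cap\N(i)} w_{ij}\bx_j$; the denominators $T_+, T_-$ are the analogous sums of the weights $w_{ij}$ alone. Each sum runs over $\Theta(n)$ terms that are independent given the labels, since edges are independent. Their expectations are $\E[S_+] = (n/2)p\,w_+\boldsymbol{\mu}_+ + O(1)$, $\E[S_-] = (n/2)q\,w_-\boldsymbol{\mu}_-$, $\E[T_+] = (n/2)p\,w_+ + O(1)$ and $\E[T_-] = (n/2)q\,w_-$. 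The self-loop contributes an $O(1)$ term, which is absorbed exactly as in \Cref{thm:degradation}.

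Second, I handle the ratio. I need $\E[(S_++S_-)/(T_++T_-)] = \E[S_++S_-]/\E[T_++T_-] + O(1/n)$. Assuming the weights are bounded, with $0\le w_{ij}\le W$, and that $p w_+ + q w_- > 0$, Hoeffding's inequality gives concentration of $T/n$ and $S/n$ at rate $O(1/\sqrt n)$. A second-order expansion of $x/y$ around the means then shows that the bias of the ratio is $O(\mathrm{Var}/\text{mean}^2) = O(1/n)$. On the exponentially unlikely event that $T$ is small, the ratio is bounded by $\max_j\|\bx_j\|$, whose contribution is negligible. This step is the main technical obstacle, since it requires both the boundedness assumption and the independence assumption on the weights. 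Combining the two steps gives
\[
\E[\bh_i^{(1)}\mid y_i=+1] = \frac{p w_+\boldsymbol{\mu}_+ + q w_-\boldsymbol{\mu}_-}{p w_+ + q w_-} + O(1/n) = \frac{p w_+ - q w_-}{p w_+ + q w_-}\cdot\frac{\mu}{2}\mathbf{e}_1 + O(1/n),
\]
where the last equality uses $\boldsymbol{\mu}_- = -\boldsymbol{\mu}_+$.

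Third, I conclude by symmetry. The same computation for $i \in V_-$ gives the negated expression, and averaging over the nodes of each class yields the stated identity. For the sign condition, the denominator $p w_+ + q w_-$ is positive, so the factor is positive if and only if $p w_+ > q w_-$. When $w_->0$ this is equivalent to $w_+/w_- > q/p$. When $w_-=0$ the ratio is interpreted as $+\infty$ and the condition holds trivially, consistent with $w_+>0$. Setting $w_+=w_-$ recovers \Cref{thm:degradation} as a sanity check.
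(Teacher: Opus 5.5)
Your proposal is correct and takes the paper's route: the paper reruns the proof of Theorem~\ref{thm:degradation} with class-dependent weights $w_\pm$ in place of uniform ones, and you differ only in rigor, making explicit the row-normalized form, the independence of the weights from the features, and the ratio-bias bound that the paper leaves implicit. These extra hypotheses are genuinely needed, since the first-moment conditions $\E[w_{ij}\mid y_i=y_j]=w_+$ alone determine neither $\E[w_{ij}\bx_j]$ nor the expectation of a ratio; your concentration step also needs the $w_{ij}$ to be independent across $j$ given the labels (e.g.\ not sharing a common dependence on $\bx_i$, as CSNA's $s_{ij}$ do), which independence of the edges alone does not give.
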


\begin{proof}
Replacing uniform weights with $w_+$ and $w_-$ in the proof of \Cref{thm:degradation}:
\begin{align}
    \E[\bh_i^{(1)} | y_i = +1] &= \frac{p\, w_+}{p\, w_+ + q\, w_-}\boldsymbol{\mu}_+ + \frac{q\, w_-}{p\, w_+ + q\, w_-}\boldsymbol{\mu}_- + O(1/n).
\end{align}
Setting $w_+=w_-=1$ recovers \Cref{thm:degradation}.
The numerator $p\,w_+ - q\,w_- > 0$ iff $w_+/w_- > q/p$. \qed
\end{proof}

\begin{remark}[Applicability to CSNA]
\Cref{thm:preservation} applies to CSNA's concordant channel with $w_{ij} = s_{ij}$.
The condition $s_+/s_- > q/p$ requires the cost function to assign sufficiently higher concordance to same-class edges; the calibration regularizer (\Cref{eq:cal}) encourages this.
\Cref{thm:preservation} is \emph{conditional}: it shows what happens \emph{if} the cost function achieves the separation, not that CSNA \emph{will} learn it.
\end{remark}

\begin{remark}[Scope and limitations]\label{rem:scope}
(i)~The CSBM analysis is for $C=2$; experiments have $C=5$.
The qualitative conclusion carries over (\Cref{app:multiclass}), but the quantitative bound changes.
(ii)~\Cref{thm:preservation} bounds the class-mean difference, not classification accuracy: a sign-reversed but well-separated representation can be corrected by a downstream linear classifier.
The theorems show that cost-sensitive weighting \emph{preserves the label-aligned signal direction} where mean aggregation may reverse it; they do not prove improved classification in isolation.
(iii)~The analysis covers a single layer; multi-layer interactions between evolving representations and the cost function are not analyzed.
\end{remark}

\section{Experiments}\label{sec:experiments}

\subsection{Datasets}

We evaluate on six heterophily benchmarks (Table~\ref{tab:datasets}).
Texas, Wisconsin, and Cornell are webpage graphs from WebKB~\cite{pei2020geom}.
Chameleon and Squirrel are Wikipedia article networks~\cite{pei2020geom}.
Actor is a co-occurrence network from film databases~\cite{pei2020geom}.
We note that Chameleon and Squirrel have known data quality issues (duplicate nodes)~\cite{platonov2023critical}.

\begin{table}[t]
\caption{Dataset statistics. $\mathcal{H}$ is the edge homophily ratio.}\label{tab:datasets}
\centering
\begin{tabular}{lccccc}
\toprule
Dataset & Nodes & Edges & Features & Classes & $\mathcal{H}$ \\
\midrule
Texas & 183 & 287 & 1,703 & 5 & 0.09 \\
Wisconsin & 251 & 458 & 1,703 & 5 & 0.19 \\
Cornell & 183 & 278 & 1,703 & 5 & 0.13 \\
Actor & 7,600 & 26,705 & 932 & 5 & 0.22 \\
Chameleon & 2,277 & 31,396 & 2,325 & 5 & 0.23 \\
Squirrel & 5,201 & 198,423 & 2,089 & 5 & 0.22 \\
\bottomrule
\end{tabular}
\end{table}

\subsection{Setup}

\paragraph{Baselines.}
We compare against: (1)~\textbf{MLP} (no graph structure), (2)~\textbf{GCN}~\cite{kipf2017semi}, (3)~\textbf{GAT}~\cite{velickovic2018graph}, (4)~\textbf{GraphSAGE}~\cite{hamilton2017inductive}, (5)~\textbf{H2GCN}~\cite{zhu2020beyond}, (6)~\textbf{GPRGNN}~\cite{chien2021adaptive}, and (7)~\textbf{ACM-GNN}~\cite{luan2022revisiting}.

\paragraph{Protocol.}
We use 10 random 60\%/20\%/20\% splits (seed 42).
\emph{All methods} are tuned over the same hyperparameter grid: learning rate $\in \{0.01, 0.005\}$, hidden dimension $\in \{64, 128\}$ on small datasets (Texas, Wisconsin, Cornell) and $\{64\}$ on large datasets (Actor, Chameleon, Squirrel) for computational efficiency.
CSNA additionally tunes temperature $\tau \in \{0.1, 0.5, 1.0, 2.0\}$.
Tuning uses 3 validation splits on small datasets and 2 on large datasets.
All models use 2 layers, dropout 0.5, Adam with weight decay $5 \times 10^{-4}$, and early stopping (patience 50, max 300 epochs).
We save the checkpoint with best validation accuracy and evaluate it \emph{once} on test.
We report classification accuracy (fraction of correctly labeled test nodes), averaged over 10 splits, with standard deviation reflecting split-to-split variability.
CSNA here is the lite version ($g_{ij}$ only, no sampling).
Complete details in \Cref{app:repro}.

\subsection{Main Results}

Results are in Table~\ref{tab:results}.

\begin{table}[t]
\caption{Node classification accuracy (\%) on heterophily benchmarks. All methods tuned over the same grid. Best in \textbf{bold}, second-best \underline{underlined}.}\label{tab:results}
\centering
\setlength{\tabcolsep}{3.5pt}
\small
\begin{tabular}{lcccccc}
\toprule
Method & Texas & Wisconsin & Cornell & Actor & Chameleon & Squirrel \\
 & $\mathcal{H}$=0.09 & $\mathcal{H}$=0.19 & $\mathcal{H}$=0.13 & $\mathcal{H}$=0.22 & $\mathcal{H}$=0.23 & $\mathcal{H}$=0.22 \\
\midrule
MLP & 77.3{\tiny$\pm$4.6} & \textbf{83.7{\tiny$\pm$4.8}} & \underline{72.2{\tiny$\pm$3.6}} & 35.0{\tiny$\pm$1.4} & 51.9{\tiny$\pm$1.8} & 34.8{\tiny$\pm$1.4} \\
GCN & 55.7{\tiny$\pm$9.9} & 50.6{\tiny$\pm$8.5} & 47.0{\tiny$\pm$8.7} & 27.3{\tiny$\pm$1.4} & \textbf{67.3{\tiny$\pm$1.7}} & \textbf{53.4{\tiny$\pm$0.8}} \\
GAT & 50.8{\tiny$\pm$9.8} & 51.4{\tiny$\pm$7.8} & 47.3{\tiny$\pm$5.8} & 28.0{\tiny$\pm$1.4} & 65.7{\tiny$\pm$2.2} & 50.4{\tiny$\pm$1.4} \\
GraphSAGE & 76.5{\tiny$\pm$6.8} & 75.9{\tiny$\pm$5.8} & 66.2{\tiny$\pm$7.7} & 34.1{\tiny$\pm$0.6} & 63.9{\tiny$\pm$2.1} & 45.8{\tiny$\pm$1.4} \\
H2GCN & \textbf{81.9{\tiny$\pm$4.2}} & \underline{82.4{\tiny$\pm$5.8}} & \underline{72.2{\tiny$\pm$4.2}} & 35.6{\tiny$\pm$0.9} & 56.8{\tiny$\pm$2.7} & 35.0{\tiny$\pm$1.3} \\
GPRGNN & \underline{77.8{\tiny$\pm$6.9}} & 75.7{\tiny$\pm$7.7} & 58.6{\tiny$\pm$9.9} & \textbf{36.0{\tiny$\pm$0.8}} & 65.0{\tiny$\pm$2.1} & 43.6{\tiny$\pm$1.8} \\
ACM-GNN & 77.3{\tiny$\pm$8.2} & 78.0{\tiny$\pm$4.4} & 68.6{\tiny$\pm$7.2} & 35.3{\tiny$\pm$1.2} & \underline{65.9{\tiny$\pm$2.9}} & \underline{51.0{\tiny$\pm$1.8}} \\
\midrule
CSNA (ours) & 77.0{\tiny$\pm$8.3} & 79.6{\tiny$\pm$6.2} & \textbf{72.7{\tiny$\pm$4.3}} & \underline{35.7{\tiny$\pm$1.2}} & 54.6{\tiny$\pm$2.7} & 37.8{\tiny$\pm$1.9} \\
\bottomrule
\end{tabular}
\end{table}

The results reveal a clear split between two types of heterophily benchmarks.
Note that many differences across methods are within one standard deviation; we focus on patterns rather than strict mean rankings.

\textbf{Adversarial heterophily (Texas, Wisconsin, Cornell, Actor).}
On these datasets, cross-class edges are adversarial: aggregating neighbor features degrades classification.
MLP is a strong baseline, and standard GNNs (GCN, GAT) perform poorly.
H2GCN is the strongest method on Texas (81.9) and Wisconsin (82.4), where its ego-separated multi-hop design is effective.
CSNA is competitive: it is statistically tied for first on Cornell (72.7 vs.\ H2GCN 72.2 and MLP 72.2, all within $\sim$0.5pp) and on Actor (35.7 vs.\ GPRGNN 36.0 and H2GCN 35.6, all within $\sim$0.4pp---well within one standard deviation).
On Texas and Wisconsin, CSNA trails H2GCN but remains within one standard deviation of the other heterophily-aware methods.

\textbf{Informative heterophily (Chameleon, Squirrel).}
On these datasets, heterophilous structure itself carries useful signal.
GCN remains the top performer (67.3 on Chameleon, 53.4 on Squirrel), but ACM-GNN is close on both datasets (65.9 and 51.0, within 1--2pp of GCN), while CSNA (54.6 and 37.8) and H2GCN (56.8 and 35.0) trail substantially.
CSNA's cost-based routing cannot distinguish ``harmful'' from ``useful'' heterophilous edges---methods that apply uniform spectral filters (ACM-GNN, GPRGNN) or standard aggregation (GCN) handle this regime better.
This distinction---between adversarial and informative heterophily---is increasingly recognized~\cite{luan2022revisiting,platonov2023critical} and suggests that no single approach dominates all heterophily regimes.
CSNA's cost semantics make the distinction explicit: on adversarial datasets, learned costs successfully separate edge types (Fig.~\ref{fig:cost_dist}); on informative datasets, they cannot.

\begin{figure}[t]
\centering
\includegraphics[width=0.9\textwidth]{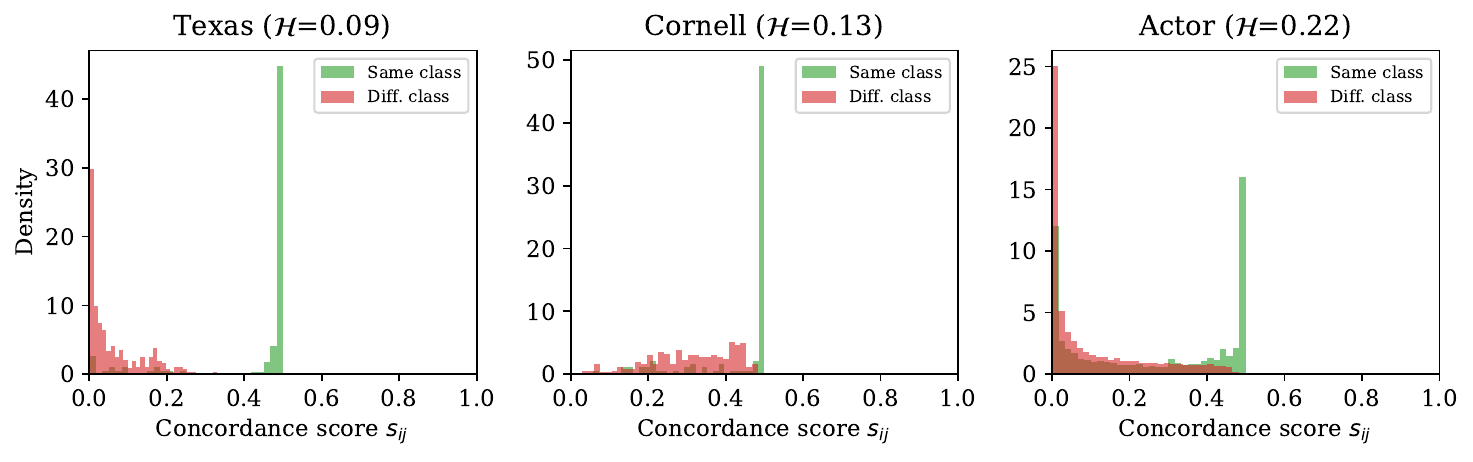}
\vspace{-2mm}
\caption{Learned concordance scores $s_{ij}$ for same-class (green) and different-class (red) edges on three datasets. On adversarial-heterophily datasets (Texas, Cornell), the distributions are well separated; on Actor, overlap is larger. Generated using the extended ($g+h$) variant; similar separation is observed with the lite version.}\label{fig:cost_dist}
\end{figure}

\paragraph{Comparison with ACM-GNN.}
ACM-GNN and CSNA share the dual-channel-plus-gating architecture but differ in routing granularity.
On adversarial-heterophily datasets, their accuracy is comparable: CSNA leads on Cornell (+4.1pp) and Actor (+0.4pp), while ACM-GNN edges ahead on Wisconsin.
The clearest separation appears on informative-heterophily datasets, where ACM-GNN's uniform spectral channels are substantially more effective: ACM-GNN outperforms CSNA by 11.3pp on Chameleon and 13.2pp on Squirrel.
This suggests that when heterophilous edges carry uniformly useful signal, fixed spectral filters preserve it better than per-edge routing, which may over-differentiate edges that are equally informative.
\paragraph{Routing quality as a diagnostic.}
To quantify the cost function's ability to discriminate edge types, we compute the AUC of the concordance score $s_{ij}$ as a binary classifier for same-class vs.\ different-class edges.
On the adversarial-heterophily datasets, AUC ranges from 0.48 (Texas) to 0.83 (Wisconsin), while on informative-heterophily datasets it is 0.55--0.57 (Chameleon, Squirrel).
Interestingly, CSNA's accuracy gain over GCN does not correlate simply with AUC: Texas has near-random AUC (0.48) yet CSNA outperforms GCN by 21pp, while Wisconsin has the highest AUC (0.83) and the largest gain (+29pp).
This suggests that CSNA's benefit comes not only from edge-type separation but also from the dual-channel architecture and gating mechanism, which can learn useful representations even when the cost function's discrimination is weak.
On informative-heterophily datasets, both the low AUC and the poor accuracy confirm that the cost function cannot usefully decompose the neighborhood.

\paragraph{Ablation highlights.}
We tested four CSNA variants in a factorial design (\Cref{app:variants}): the extended model ($g+h$) vs.\ the lite default ($g$ only), each with and without stochastic edge sampling~\cite{rong2020dropedge}.
The extended variant wins clearly only on Wisconsin (84.1 vs.\ 79.6); on all other datasets the lite version matches or exceeds it.
Edge sampling slightly reduces accuracy (1--3pp) but provides regularization and scaling benefits.
These results motivate presenting the lite version without sampling as the default.

\section{Discussion}\label{sec:discussion}

\paragraph{Limitations---and what they reveal.}
\begin{itemize}
    \item \textbf{Homophilous graphs.} CSNA underperforms GCN by 4--11pp on Cora, CiteSeer, and PubMed (\Cref{app:homophily}). The per-edge routing is not justified when standard aggregation suffices; this is expected, since the cost function has little to separate when most edges are already same-class.
    \item \textbf{Informative heterophily.} CSNA underperforms GCN on Chameleon (by 12.7pp) and Squirrel (by 15.6pp), and trails ACM-GNN by 11--13pp on both. Rather than a mere negative result, this failure is diagnostic: it identifies datasets where heterophilous edges are uniformly informative and per-edge routing over-differentiates edges that carry equally useful signal. Fixed spectral filters (ACM-GNN) and standard aggregation (GCN) better preserve this signal.
    \item \textbf{Computational cost.} CSNA is 3--10$\times$ slower than GCN (\Cref{app:runtime}), though it uses fewer parameters.
    \item \textbf{Comparison with ACM-GNN.} CSNA shares the dual-channel-plus-gating design with ACM-GNN~\cite{luan2022revisiting}. On adversarial-heterophily datasets the methods achieve comparable accuracy; on informative-heterophily datasets ACM-GNN is clearly superior. The per-edge routing is a meaningful architectural difference whose value is regime-dependent.
\end{itemize}

\section{Conclusion}\label{sec:conclusion}

We presented CSNA, a GNN layer that computes pairwise distance in a learned projection and uses it to soft-route messages through concordant and discordant channels, with per-node gating.
The key architectural difference from the closely related ACM-GNN is per-edge routing rather than uniform spectral channels---a finer-grained mechanism that is competitive on adversarial-heterophily datasets but underperforms on informative-heterophily datasets where uniform filters are more effective.
Our theoretical analysis provides a clean condition ($w_+/w_- > q/p$) under which cost-sensitive weighting preserves the label-aligned signal direction that mean aggregation may reverse.
The results operationalize the adversarial/informative heterophily distinction: CSNA's cost function achieves meaningful edge-type separation only in the adversarial regime, serving as a diagnostic for the nature of heterophily in a given graph.
The clearest open question is whether per-edge routing can be adapted to the informative-heterophily regime---possibly by learning to leverage rather than suppress cross-class signal.

\bibliographystyle{splncs04}
\bibliography{references}

\appendix
\section*{Appendix}
\addcontentsline{toc}{section}{Appendix}

\section{CSNA Variant Comparison}\label{app:variants}

We compare four CSNA variants in a factorial design: full ($g+h$) vs.\ lite ($g$ only) $\times$ edge sampling vs.\ no sampling.
Results are in Table~\ref{tab:variants}.

\begin{table}[h]
\caption{CSNA variant comparison: accuracy (\%) on all six datasets. ``Full'' includes both $g_{ij}$ and $h_{ij}$; ``lite'' uses $g_{ij}$ only. ``Samp'' applies edge sampling during training.}\label{tab:variants}
\centering
\small
\begin{tabular}{lcccccc}
\toprule
Variant & Texas & Wisconsin & Cornell & Actor & Chameleon & Squirrel \\
\midrule
Full, no samp & 77.0 & \textbf{84.1} & 70.8 & 35.6 & 53.0 & 38.2 \\
Full + samp & 74.1 & 79.8 & 70.5 & \textbf{36.3} & 51.8 & 37.4 \\
Lite, no samp & \textbf{77.0} & 79.6 & \textbf{72.7} & 35.7 & \textbf{54.6} & \textbf{37.8} \\
Lite + samp & 75.7 & 79.6 & 70.3 & 35.8 & 53.5 & 37.0 \\
\bottomrule
\end{tabular}
\end{table}

\paragraph{Discussion.}
The full variant ($g+h$) wins clearly only on Wisconsin (84.1 vs.\ 79.6), where the learned component $h_{ij}$ provides a useful correction beyond observed divergence.
On all other datasets, the lite version matches or exceeds the full version, suggesting that feature divergence $g_{ij}$ alone is a sufficient routing signal.
Since $h_{ij}$ introduces additional learnable parameters without a clear consistent advantage, it risks overfitting---particularly on the small datasets in our benchmark suite.
We therefore use the lite version ($g_{ij}$ only) as the default.

Edge sampling slightly hurts accuracy on most datasets (1--3pp) but may act as a regularizer on Actor (36.3 vs.\ 35.6 for the full variant).
The accuracy cost of sampling is modest, making it a viable strategy for scaling to larger graphs.

\section{Homophily Benchmarks}\label{app:homophily}

To understand CSNA's behavior across the homophily spectrum, we evaluate on three standard homophilous datasets (Table~\ref{tab:homophily}).

\begin{table}[h]
\caption{Accuracy (\%) on homophilous benchmarks. CSNA underperforms GCN by 4--11pp.}\label{tab:homophily}
\centering
\begin{tabular}{lccc}
\toprule
Method & Cora & CiteSeer & PubMed \\
\midrule
GCN & \textbf{77.9} & \textbf{65.7} & \textbf{76.0} \\
CSNA & 67.0 & 56.3 & 72.2 \\
\bottomrule
\end{tabular}
\end{table}

CSNA underperforms GCN by 10.9pp on Cora, 9.4pp on CiteSeer, and 3.8pp on PubMed.
This is expected: on homophilous graphs, standard aggregation is already effective, and the per-edge routing overhead adds complexity without benefit.
The cost function cannot improve on uniform aggregation when most edges are already same-class.
These results suggest CSNA's per-edge routing overhead is not justified on homophilous graphs, where standard aggregation already performs well.

\section{Gate Weight Analysis}\label{app:gate}

Table~\ref{tab:gate_weights} shows the average gate weights at layer 0, revealing dataset-dependent routing strategies.

\begin{table}[h]
\caption{Average gate weights $\gamma$ across nodes (layer 0).}\label{tab:gate_weights}
\centering
\small
\begin{tabular}{lccc}
\toprule
Dataset & $\gamma_{\text{con}}$ & $\gamma_{\text{dis}}$ & $\gamma_{\text{self}}$ \\
\midrule
Texas & 0.30 & 0.13 & 0.57 \\
Wisconsin & 0.00 & 0.00 & 1.00 \\
Cornell & 0.12 & 0.28 & 0.60 \\
Actor & 0.00 & 0.00 & 1.00 \\
Chameleon & 0.34 & 0.66 & 0.00 \\
Squirrel & 0.14 & 0.86 & 0.00 \\
\bottomrule
\end{tabular}
\end{table}

The gate weights reveal three distinct strategies:
\begin{itemize}
    \item \textbf{Self-dominant (Wisconsin, Actor):} The model ignores graph structure entirely ($\gamma_{\text{self}} = 1.0$), effectively reducing to MLP. This is consistent with CSNA matching MLP-level accuracy on these datasets.
    The collapse to MLP-like behavior is itself informative: it indicates that on these datasets, the cost function does not find a useful decomposition of the neighborhood, and the gating mechanism correctly learns to ignore the graph channels.
    \item \textbf{Mixed (Texas, Cornell):} The model uses a combination of ego and graph channels. On Cornell, the discordant channel receives substantial weight ($\gamma_{\text{dis}} = 0.28$), indicating that different-class neighbor information is actively processed.
    \item \textbf{Graph-dominant (Chameleon, Squirrel):} The model relies entirely on graph channels, with the discordant channel dominant ($\gamma_{\text{dis}} = 0.66$ and $0.86$ respectively). Ironically, these are the datasets where CSNA underperforms GCN, suggesting the discordant channel's separate transformation does not capture the useful heterophilous signal as effectively as GCN's uniform aggregation.
\end{itemize}

\section{Runtime and Parameter Comparison}\label{app:runtime}

\begin{table}[h]
\caption{Training time (seconds per split) and parameter count. All on 8-core CPU.}\label{tab:runtime}
\centering
\small
\begin{tabular}{lcccc|c}
\toprule
Method & Texas & Actor & Chameleon & Squirrel & Params (Texas) \\
\midrule
MLP & 0.2 & 3 & 1 & 2 & 219K \\
GCN & 0.2 & 4 & 7 & 62 & 219K \\
GAT & 0.3 & 11 & 20 & 152 & 110K \\
GraphSAGE & 0.3 & 15 & 30 & 132 & 219K \\
H2GCN & 0.6 & 11 & 27 & 327 & 318K \\
GPRGNN & 0.6 & 9 & 14 & 111 & 219K \\
CSNA & 1 & 34 & 37 & 205 & 144K \\
\bottomrule
\end{tabular}
\end{table}

CSNA is 3--10$\times$ slower than GCN due to the per-edge distance computation.
On the largest dataset (Squirrel, 198K edges), a single split takes $\sim$205 seconds vs.\ 62 for GCN.
However, CSNA uses fewer parameters (144K vs.\ 219K for GCN on Texas) because the projection $\bW_g$ is shared between cost computation and the channels.
For scaling to larger graphs, edge sampling~\cite{rong2020dropedge} is a natural option; our ablation (\Cref{app:variants}) shows it costs only 1--3pp in accuracy.

\section{Proof Details}\label{app:proofs}

\subsection{Detailed Proof of Theorem~\ref{thm:degradation}}

Consider the binary CSBM with $n$ nodes, two equal-sized classes, constant edge probabilities $p, q \in (0,1)$ with $q > p$.
Let $V_+ = \{i : y_i = +1\}$ and $V_- = \{i : y_i = -1\}$, each of size $n/2$.
Features: $\bx_i \sim \mathcal{N}(\boldsymbol{\mu}_{y_i}, \mathbf{I}_d)$, with $\boldsymbol{\mu}_+ = +\frac{\mu}{2}\mathbf{e}_1$ and $\boldsymbol{\mu}_- = -\frac{\mu}{2}\mathbf{e}_1$.

For a node $i \in V_+$: $n_s(i) \sim \mathrm{Bin}(n/2 - 1, p)$ and $n_d(i) \sim \mathrm{Bin}(n/2, q)$, with $d_i = 1 + n_s(i) + n_d(i)$ (including the self-loop).
Since $\E[d_i] = \Theta(n)$, the self-loop contributes $O(1/n)$ per node and vanishes at leading order.

By Hoeffding's inequality, $n_s(i)/d_i$ concentrates around $p/(p+q)$ at rate $O(1/\sqrt{n})$.
At leading order:
\begin{align}
    \E[\bh_i^{(1)} | y_i = +1] &= \frac{p}{p+q}\boldsymbol{\mu}_+ + \frac{q}{p+q}\boldsymbol{\mu}_- + O(1/n) = \frac{p-q}{p+q} \cdot \frac{\mu}{2}\mathbf{e}_1 + O(1/n).
\end{align}
By symmetry, $\E[\bh_i^{(1)} | y_i = -1] = \frac{p-q}{p+q} \cdot (-\frac{\mu}{2}\mathbf{e}_1) + O(1/n)$.
Averaging over $n/2$ nodes per class (the $O(1/\sqrt{n})$ per-node errors average to $O(1/n)$):
\begin{equation}
    \E[\bar{\bh}_+^{(1)} - \bar{\bh}_-^{(1)}] = \frac{p-q}{p+q}(\boldsymbol{\mu}_+ - \boldsymbol{\mu}_-) + O(1/n). \qed
\end{equation}

\subsection{Detailed Proof of Theorem~\ref{thm:preservation}}

The weighted aggregation assigns expected weight $w_+$ to same-class edges and $w_-$ to different-class edges.
The self-loop contributes $O(1/n)$ as in Theorem~\ref{thm:degradation} and does not affect the leading coefficient.
At leading order:
\begin{align}
    \E[\bh_i^{(1)} | y_i = +1] &= \frac{p\, w_+}{p\, w_+ + q\, w_-}\boldsymbol{\mu}_+ + \frac{q\, w_-}{p\, w_+ + q\, w_-}\boldsymbol{\mu}_- + O(1/n).
\end{align}
Averaging over classes:
\begin{equation}
    \E[\bar{\bh}_+^{(1)} - \bar{\bh}_-^{(1)}] = \frac{p\,w_+ - q\,w_-}{p\,w_+ + q\,w_-}\,(\boldsymbol{\mu}_+ - \boldsymbol{\mu}_-) + O(1/n).
\end{equation}

Setting $w_+=w_-=1$ recovers Theorem~\ref{thm:degradation}.

\emph{Sign preservation:} The scaling factor is positive iff $p\,w_+ > q\,w_-$, i.e., $w_+/w_- > q/p$. $\qed$

\section{Multi-Class Extension}\label{app:multiclass}

The binary CSBM analysis extends to $C > 2$ classes as follows.
In a $C$-class CSBM with intra-class probability $p$ and uniform inter-class probability $q$:
\begin{itemize}
    \item The expected fraction of same-class neighbors is $p/(p + (C-1)q) = \mathcal{H}$.
    \item The between-class scatter matrix $\mathbf{S}_B$ has rank $C-1$.
    \item Under mean aggregation, each pairwise class-mean difference is scaled by the signed factor $(p - q)/(p + (C-1)q)$, which is negative when $q > p$ (signal direction reversal).
    \item Under cost-sensitive aggregation, the factor becomes $(p\,w_+ - q\,w_-)/(p\,w_+ + (C-1)q\,w_-)$. Sign preservation (positive factor) requires $w_+/w_- > q/p$.
\end{itemize}
The qualitative conclusion is unchanged: cost-sensitive weighting preserves the label-aligned signal direction where mean aggregation may reverse it.

\section{Within-Class Scatter}\label{app:within_class}

\Cref{thm:preservation} bounds the class-mean difference but not within-class scatter $\mathrm{tr}(\mathbf{S}_W)$.
For a node $i \in V_+$:
\begin{equation}
    \bh_i^{(1)} - \bar{\bh}_+^{(1)} = \sum_{j \in \N(i)} w_{ij}(\bx_j - \E[\bx_j | y_j]) + \text{(edge-composition terms)}.
\end{equation}
The first term involves feature noise averaged over $d_i = \Theta(n)$ neighbors (variance $O(d/n)$).
Under cost-sensitive weighting, same-class neighbors receive higher weight, so the aggregated representation is dominated by same-class features.
The within-class scatter is bounded by $\E[\mathrm{tr}(\mathbf{S}_W)] \leq n\, d / d_{\mathrm{eff}}$, where $d_{\mathrm{eff}} = \E[\sum_{j} w_{ij}^2]^{-1}$.
A complete formal bound requires tracking the correlation between the random graph and the cost function (which depends on features), making a tight bound technically challenging.

\section{Reproducibility}\label{app:repro}

\paragraph{Hardware.} All experiments on a single machine with an 8-core CPU and 32GB RAM.

\paragraph{Software.} Python, PyTorch, PyTorch Geometric.
Source code: \url{https://github.com/eyal-weiss/CSNA-public}.

\paragraph{Splits.} 10 random 60/20/20 train/validation/test splits per dataset, generated with seed 42.

\paragraph{Training.} Early stopping with patience 50, maximum 300 epochs.
Model checkpoint: best validation accuracy.
Test evaluation: once, using saved checkpoint.

\paragraph{Tuning protocol.} All methods tuned over the same grid:
\begin{itemize}
    \item Learning rate: $\{0.01, 0.005\}$
    \item Hidden dimension: $\{64, 128\}$ on small datasets (Texas, Wisconsin, Cornell); $\{64\}$ on large datasets (Actor, Chameleon, Squirrel)
    \item CSNA additionally: $\tau \in \{0.1, 0.5, 1.0, 2.0\}$
\end{itemize}
Best configuration selected by mean validation accuracy over 3 tuning splits (small datasets) or 2 tuning splits (large datasets), with tuning epochs capped at 200 (small) or 150 (large).
Final results reported on all 10 splits with full 300-epoch training.

\end{document}